\newtheorem{definition}{Definition}
\newtheorem{lemma}{Lemma}
\begin{document}
%
% paper title
% Titles are generally capitalized except for words such as a, an, and, as,
% at, but, by, for, in, nor, of, on, or, the, to and up, which are usually
% not capitalized unless they are the first or last word of the title.
% Linebreaks \\ can be used within to get better formatting as desired.
% Do not put math or special symbols in the title.
\title{On the Universal Approximation Property and Equivalence of Stochastic Computing-based Neural Networks and Binary Neural Networks}

% author names and affiliations
% use a multiple column layout for up to three different
% affiliations
\author[1]{Yanzhi Wang}
\author[1]{Zheng Zhan}
\author[1]{Jiayu Li}
\author[1]{Jian Tang}
\author[2]{Bo Yuan}
\author[3]{Liang Zhao}
\author[4]{\\Wujie Wen}
\author[5]{Siyue Wang}
\author[5]{Xue Lin}
\affil[1]{Department of Electrical Engineering and Computer Science, Syracuse University}
\affil[2]{Department of Electrical Engineering, City College of the City University of New York}
\affil[3]{Department of Mathematics and Computer Science, Lehman College of CUNY}
\affil[4]{Department of Electrical and Computer Engineering, Florida International University}
\affil[5]{Department of Electrical and Computer Engineering, Northeastern University}
\renewcommand\Authands{ and }
% conference papers do not typically use \thanks and this command
% is locked out in conference mode. If really needed, such as for
% the acknowledgment of grants, issue a \IEEEoverridecommandlockouts
% after \documentclass

% for over three affiliations, or if they all won't fit within the width
% of the page (and note that there is less available width in this regard for
% compsoc conferences compared to traditional conferences), use this
% alternative format:
% 
%\author{\IEEEauthorblockN{Michael Shell\IEEEauthorrefmark{1},
%Homer Simpson\IEEEauthorrefmark{2},
%James Kirk\IEEEauthorrefmark{3}, 
%Montgomery Scott\IEEEauthorrefmark{3} and
%Eldon Tyrell\IEEEauthorrefmark{4}}
%\IEEEauthorblockA{\IEEEauthorrefmark{1}School of Electrical and Computer Engineering\\
%Georgia Institute of Technology,
%Atlanta, Georgia 30332--0250\\ Email: see http://www.michaelshell.org/contact.html}
%\IEEEauthorblockA{\IEEEauthorrefmark{2}Twentieth Century Fox, Springfield, USA\\
%Email: homer@thesimpsons.com}
%\IEEEauthorblockA{\IEEEauthorrefmark{3}Starfleet Academy, San Francisco, California 96678-2391\\
%Telephone: (800) 555--1212, Fax: (888) 555--1212}
%\IEEEauthorblockA{\IEEEauthorrefmark{4}Tyrell Inc., 123 Replicant Street, Los Angeles, California 90210--4321}}

% use for special paper notices
%\IEEEspecialpapernotice{(Invited Paper)}

% make the title area
\maketitle

% As a general rule, do not put math, special symbols or citations
% in the abstract
\begin{abstract}
	Large-scale deep neural networks are both memory and computation-intensive, thereby posing stringent requirements on the computing platforms. Hardware accelerations of deep neural networks have been extensively investigated. Specific forms of binary neural networks (BNNs) and stochastic computing-based neural networks (SCNNs) are particularly appealing to hardware implementations since they can be implemented almost entirely with binary operations.

Despite the obvious advantages in hardware implementation, these approximate computing techniques are questioned by researchers in terms of accuracy and universal applicability. Also it is important to understand the relative pros and cons of SCNNs and BNNs in theory and in actual hardware implementations. In order to address these concerns, in this paper we prove that the "ideal" SCNNs and BNNs satisfy the universal approximation property with probability 1 (due to the stochastic behavior). The proof is conducted by first proving the property for SCNNs from the strong law of large numbers, and then using SCNNs as a "bridge" to prove for BNNs. Based on the universal approximation property, we further prove that SCNNs and BNNs exhibit the same energy complexity. In other words, they have the same asymptotic energy consumption with the growing of network size. We also provide a detailed analysis of the pros and cons of SCNNs and BNNs for hardware implementations and conclude that SCNNs are more suitable.

\end{abstract}

% no keywords

% For peer review papers, you can put extra information on the cover
% page as needed:
% \ifCLASSOPTIONpeerreview
% \begin{center} \bfseries EDICS Category: 3-BBND \end{center}
% \fi
%
% For peerreview papers, this IEEEtran command inserts a page break and
% creates the second title. It will be ignored for other modes.
\IEEEpeerreviewmaketitle

\section{Introduction}
Large-scale neural networks are both memory-intensive and computation-intensive, thereby posing stringent requirements on the computing platforms when deploying those large-scale neural network models on memory-constrained and energy-constrained embedded devices. 
In order to overcome these limitations, the hardware accelerations of deep neural networks have been extensively investigated in both industry and academia \cite{mahajan2016tabla,zhao2017accelerating,umuroglu2017finn,company1,company2,han2016eie,chen2014dadiannao,moons201714}. 
These hardware accelerations are based on FPGA and ASIC devices and can achieve a significant improvement on energy efficiency, along with small form factor, compared with traditional CPU or GPU based computing of deep neural networks. 
Both characteristics are critical for the battery-powered embedded and autonomous systems.

Hardware systems, including FPGAs and ASICs, have much higher peak performance for binary operations compared to floating point ones. Besides, it is also desirable to reduce the model size of deep neural network such that the whole model can be stored using on-chip memory, thereby reducing the timing and energy overheads of off-chip storage and communications. 
As a result, the Binary Neural Networks (BNNs), proposed by \cite{courbariaux2015binaryconnect}, are particularly appealing since they can be implemented almost entirely with binary operations, with the potential to attain performance in the tera-operations per second (TOPS) range on FPGAs or ASICs. 

Besides BNNs, reference work \cite{ren2017sc,yu2017accurate,kyounghoon2015approximate,merolla2014million,liquantized,neftci2016stochastic,andreou2016software} have also proposed to utilize the hardware-oriented Stochastic Computing (SC) technique for developing (large-scale) deep neural networks, i.e., SCNNs. The SC technique represents a number using the portion of 1's in a bit sequence. Many key operations in neural networks, such as multiplications and additions, can be implemented in a single gate in SC. For example, multiplication of two stochastic numbers can be implemented using a single AND gate or XNOR gate (depending on unipolar or bipolar representations).
It enables the efficient implementation of deep neural networks with extremely small hardware footprint.

The BNNs and SCNNs are essentially alike: Both rely on binary operations and very simple calculations in hardware such as AND, XNOR gates, multiplexers and counters. For their distinctions, SCNNs "stretch" in the temporal domain and use a bit sequence (stochastic number) to approximate a real number, whereas BNNs "span" in the spatial domain and require more input and hidden neurons to maintain the desired accuracy. 

Despite the obvious advantages in hardware implementation, these approximate computing techniques are questioned by researchers in terms of accuracy. Will SCNNs and BNNs be accurate for any types of neural networks and applications? More specifically, conventional neural networks with at least one hidden layer satisfy the \emph{universal approximation property} \cite{csaji2001approximation} in that they can approximate an arbitrary continuous or measurable function given enough number of neurons in the hidden layer. Will SCNNs and BNNs satisfy such property as well? Finally, what are the relative pros and cons of SCNNs and BNNs in theory, and at the hardware level?

In this paper we aim to answer the above questions. We consider the "ideal" SCNNs and BNNs that are independent of specific hardware implementations. As the key contribution of this paper, we prove that SCNNs and BNNs satisfy the universal approximation property with probability 1 (due to the stochastic behavior in these networks). The proof is conducted by first proving the property for SCNNs from the strong law of large numbers, and then using SCNNs as a "bridge" to prove for BNNs. This is because it is difficult to directly prove the property for BNNs, as BNNs represent functions with discrete (binary) input values instead of continuous ones.

Based on the universal approximation property, we further prove that SCNNs and BNNs exhibit the same energy complexity. In other words, they have the same asymptotic energy consumption with the growing of network size. We also provide a detailed analysis of the pros and cons of SCNNs and BNNs for hardware implementations and conclude that SCNNs are more suitable for hardware.

\section{Background and Related Work}
\subsection{Stochastic Computing and SCNNs}

Stochastic computing (SC) is a paradigm that represents a number, named \emph{stochastic number}, by counting the number of ones in a bit-stream. For example, the bit-stream 0100110100 contains four ones in a ten-bit stream, thus it represents $x=P(X=1)=4/10=0.4$. In the bit-stream, each bit is independent and identically distributed (i.i.d.) which can be generated in hardware using stochastic number generators (SNGs). Obviously, the length of the bit-streams can significantly affect the calculation accuracy in SC \cite{brown2001stochastic}. 
In addition to this unipolar encoding format, SC can also represent numbers in the range of $[-1,1]$ using the bipolar encoding format. In this scenario, a real number $x$ is processed by $P(X=1)=(x+1)/2$. Thus 0.4 can be represented by 1011011101, as $P(X=1)=(0.4+1)/2=7/10$. 

\begin{wrapfigure}{r}{0.5\textwidth}
	\begin{center}
		\includegraphics[width = 0.5\textwidth]{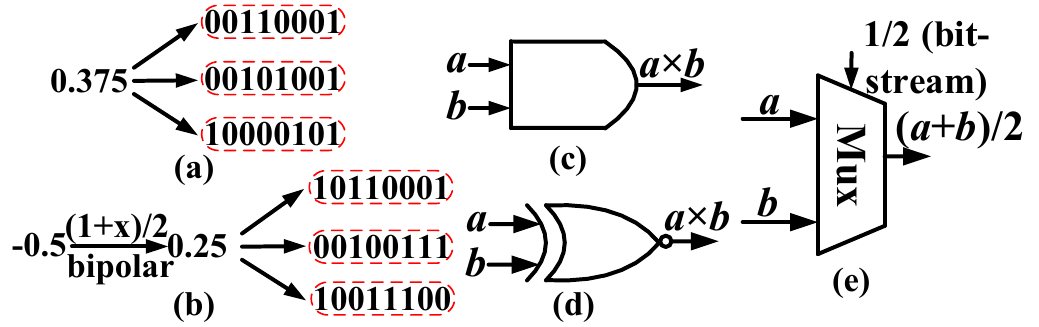}
		\caption{(a) Unipolar encoding format and (b) bipolar encoding format. (c) AND gate for unipolar multiplication. (d) XNOR gate for bipolar multiplication. (e) MUX gate for addition.}
		\label{fig_SCgate}
	\end{center}
	\vspace{-0.3em}
\end{wrapfigure}

Compared to conventional computing, the major advantage of stochastic computing is the significantly lower hardware cost for a large category of arithmetic calculations. 
A summary of the basic computing components in SC, such as multiplication and addition, is shown in Figure \ref{fig_SCgate}. As an illustrative example, a unipolar multiplication can be performed by a single AND gate since $P(A\cdot B=1)=P(A=1)P(B=1)$ (assuming independence), and a bipolar multiplication is performed by a single XNOR gate since $c=2P(C=1)-1=2(P(A=1)P(B=1)+P(A=0)P(B=0))-1=(2P(A=1)-1)(2P(B=1)-1)=ab$. 

Besides multiplications and additions, SC-based activation functions are also developed \cite{li2017deep,li2017hardware}. As a result, SC has become an interesting and promising approach to implement large-scale neural networks  \cite{yuan2017softmax,yu2017accurate,li2017structural,kyounghoon2015approximate} with high performance/energy efficiency and minor accuracy degradation.

\subsection{Binary Neural Networks (BNNs)}

BNNs use binary weights, i.e., weights that are constrained to only two possible values (not necessarily 0 and 1) \cite{courbariaux2015binaryconnect}.
BNNs also have great potential to facilitate consumer applications on low-power devices and embedded systems.
\cite{zhao2017accelerating,umuroglu2017finn} have implemented BNNs in FPGAs with high performance and modest power consumption.

BNNs constrain the weights to either $+1$ or $-1$ during the forward propagation process. 
As a result, many multiply-accumulate operations are replaced by simple additions (and subtractions) using single gates.
This results in a huge gain in hardware resource efficiency, as fixed-point adders/accumulators are much less expensive both in area and energy than fixed-point multiply-accumulators \cite{david2007hardware}.

The real-valued weights are transformed into the two possible values through the following stochastical binarization operation:
\begin{equation}
w_{\text{B}}=\left\{
\begin{aligned}
+1 \qquad & \text{with probability} \quad p=\sigma(w) \\
-1 \qquad & \text{with probability} \quad 1-p \\
\end{aligned}
\right.
\end{equation}
where $\sigma$ is the \emph{hard sigmoid} function:
\begin{equation}
\sigma(x)=\text{clip}(\frac{x+1}{2},0,1)=\max(0,\min(1,\frac{x+1}{2}))
\end{equation}
A hard sigmoid rather than the soft version is used because it is far less computationally expensive.

At training time, BNNs randomly pick one of two values for each weight, for each minibatch, for both the forward and backward propagation phases of backpropagation. However, the stochastic gradient descent (SGD) update is accumulated in a real-valued variable storing the parameter to average the noise for keeping sufficient resolution.
Moreover, binarization process adds some noise into the model, which provides a form of generalization to address the over-fitting problem. 

%A key point to understand with BNNs is that they only binarize the weights during the forward and backward propagations but not during the stochastic gradient descent (SGD) update.

\subsection{Universal Approximation Property}
For feedforward neural networks with one hidden layer, \cite{cybenko1989approximation} and \cite{hornik1989multilayer} have proved separately the universal approximation property, which guarantees that for any given continuous function or measurable function and any error bound $\epsilon >0$, there always exists a single-hidden layer neural network that approximates the function within $\epsilon$ integrated error. Besides the approximation property itself, it is also desirable to cast a limit on the maximum amount of neurons. In this direction, \cite{barron1993universal} showed that feedforward networks with one layer of sigmoidal nonlinearities achieve an integrated squared error with order of O$(1/n)$, where $n$ is the number of neurons.

More recently, several interesting results were published on the approximation capabilities of deep neural networks or neural networks using structured matrices. \cite{delalleau2011shallow} have shown that there exists certain functions that can be approximated by three-layer neural networks with a polynomial amount of neurons, while two-layer neural networks require exponentially larger amount to achieve the same error. \cite{montufar2014number} and \cite{telgarsky2016benefits} have shown the exponential increase of linear regions as neural networks grow deeper. \cite{liang2016deep} proved that with $\log(1/\epsilon)$ layers, the neural network can achieve the error bound $\epsilon$ for any continuous function with O($polylog(\epsilon)$) parameters in each layer. Recently, \cite{zhao2017theoretical} have proved that neural networks represented in structured, low displacement rank matrices preserve the universal approximation property. These recent research have sparked the research interests on the theoretical properties of neural networks with simplifications/approximations which are suitable for high-efficiency hardware implementations.

\section{Neural Network of Interests and SCNNs}

Our problem statement follows the flow of reference work \cite{zhao2017theoretical} for investigating the universal approximation property. Let $I_{n}$ denote the $n$-dimensional unit cube, $[0,1]^{n}$. The space of continuous functions on $I_{n}$ is denoted by $C(I_{n})$. A feedforward neural network with $N$ units of neurons arranged in a single hidden layer is denoted by a function $G:\mathbb{R}^n\rightarrow\mathbb{R}$, satisfying the form
\begin{equation}
G(\bm{x})=\sum^N_{i=1}\alpha_{i}\sigma(\bm{w}^{\mathsf{T}}_{i}\bm{x}+b_{i})
\end{equation}
where $\bm{w}_{i}$, $\bm{x}\in\mathbb{R}^n$, $\alpha_{i}$, $b_{i}\in\mathbb{R}$, and $\sigma$ is a nonlinear sigmoidal activation function. The $\bm{w}_{i}$ denotes weights associated with hidden neuron $i$ and is applied to input $\bm{x}$. $\alpha_{i}$ denotes the $i$-th weight of output neuron, and is applied to the output of $i$-th neuron in the hidden layer. $b_{i}$
is the bias of unit $i$.

\begin{definition}
	A sigmoidal activation function
	$\sigma:\mathbb{R}\rightarrow\mathbb{R}$ satisfies
	\[ \sigma(t)\rightarrow\begin{cases}
	1 & as\quad t\rightarrow\infty\\
	0 & as\quad t\rightarrow-\infty
	\end{cases} \]
\end{definition}

\begin{definition}
	Starting from the neural network of interests, we define an SCNN satisfying the form:
	\begin{equation}
	G_{\text{SC,M}}(\bm{x}_{\text{SC,M}})=\sum^N_{i=1}\alpha_{i}\sigma(\bm{w}^{\mathsf{T}}_{\text{SC,M}, i}\bm{x}_{\text{SC,M}}+b_{\text{SC,M},i})
	\end{equation}
	where each element $j$ in $\bm{w}^{\mathsf{T}}_{\text{SC,M}, i}$ is denoted by $w^{j}_{\text{SC,M}, i}$, and each element in $\bm{x}_{\text{SC,M}}$ is denoted by $x^j_{\text{SC,M}}$. $w^{j}_{\text{SC,M}, i}$, $x^j_{\text{SC,M}}$, and $b_{\text{SC,M},i}$ 
	are stochastic numbers represented by $M$-bit streams, as approximations of $w^{j}_{i}$, $x^j$, and $b_{i}$, respectively.
	These bit-streams are independent in each bit and $\bm{w}^{\mathsf{T}}_{\text{SC,M}, i}$, $\bm{x}_{\text{SC,M}}$, and $b_{\text{SC,M},i}$ will converge to $\bm{w}^{\mathsf{T}}_i$, $\bm{x}$, and $b_i$ as $M\rightarrow \infty$, respectively.
	%$M$-bit stochastic numbers (in fact, $\bm{w^{\mathsf{T}}_{\text{SC,M}, i}}$ and $\bm{x_{\text{SC,M}}}$ are essentially vectors of stochastic numbers). 
	%These stochastic numbers are independent in every bit and will converge to the original numbers as $M\rightarrow \infty$. So obviously, $\bm{w^{\mathsf{T}}_{\text{SC,M}, i}}$, $\bm{x_{\text{SC,M}}}$, and $b_{\text{SC,M},i}$ are approximations of $\bm{w^{\mathsf{T}}_{i}}$, $\bm{x}$, and $b_{i}$, respectively, in the original neural network.
	The computation in $\bm{w}^{\mathsf{T}}_{\text{SC,M}, i}\bm{x}_{\text{SC,M}}+b_{\text{SC,M},i}$ follows the SC rules described in Section 2.1.
\end{definition}

In the above definitions we focus on an "ideal" SCNN that assumes accurate activation and output layer calculation (which is reasonable because the output layer size is typically very small). The SCNN of interest, as illustrated in Figure \ref{fig_SCNN}, does not depend on specific hardware implementations that may be different in practice. We also do not specify any limitation on the weight and input ranges because they can be effectively dealt with by pre-scaling techniques.

\section{The Universal Approximation Property of SCNNs and BNNs}
%LZ. modification?

\begin{wrapfigure}{r}{0.44\textwidth}
	%\begin{center}
	\includegraphics[width = 0.44\textwidth]{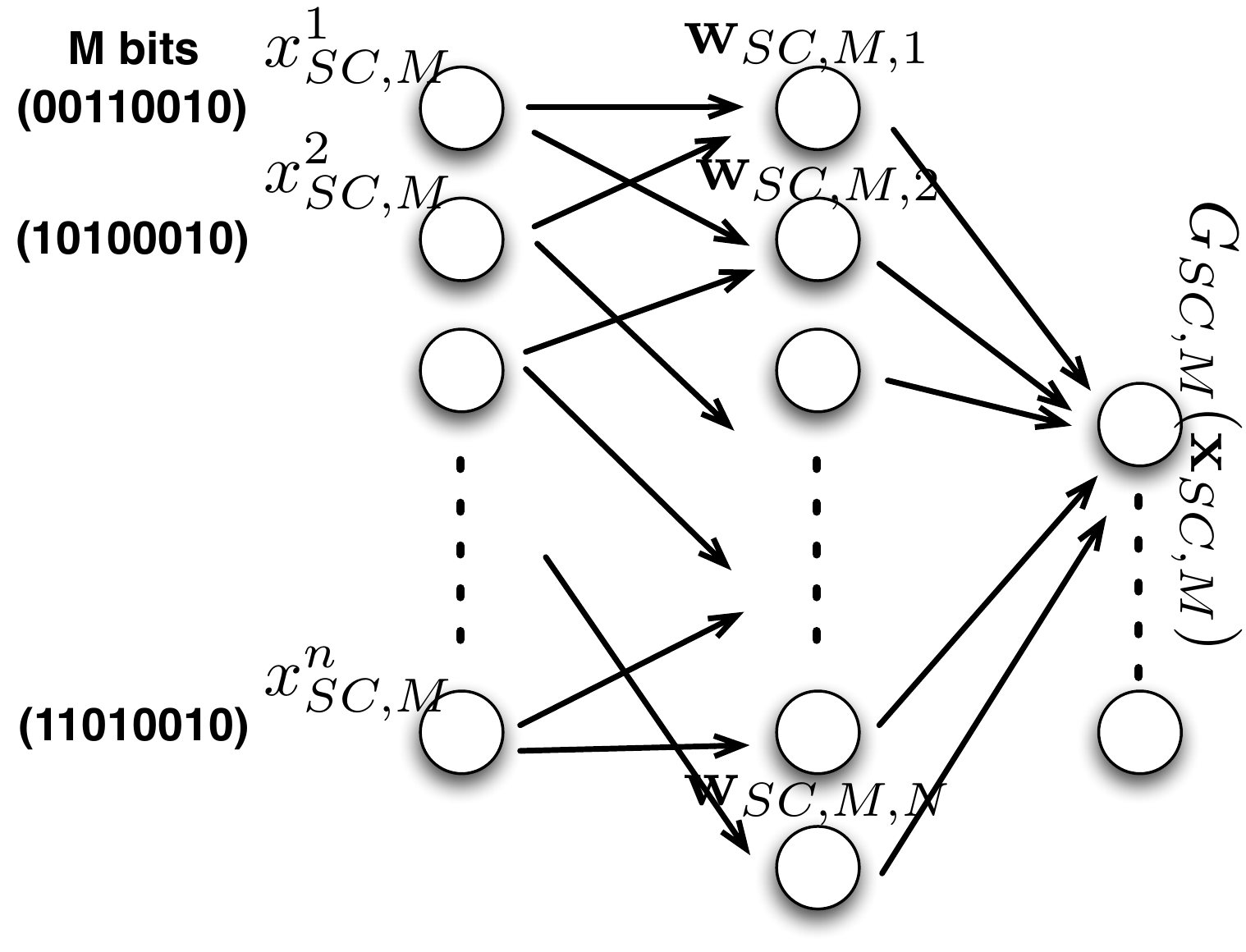}
	\caption{The structure of SCNN of interest.}
	\label{fig_SCNN}
	%\end{center}
	\vspace{-1.cm}
\end{wrapfigure}

In this section, we prove that SCNNs and BNNs satisfy the universal approximation property with probability 1. More specifically, we first prove the property for SCNNs and then use SCNNs as a "bridge" to prove for BNNs. This two-step proof is due to the fact that directly proving the property for BNNs is difficult, as BNNs represent functions with binary input values.

\subsection{Universal Approximation Property of SCNN}
In this section we will prove a lemma on the closeness of stochastic approximation for the inputs of each neuron, a lemma on the closeness of approximations for the outputs, and finally extend the universal approximation theorem from \cite{cybenko1989approximation} to SCNNs.
\begin{lemma}\label{lem:1}
	As the bit-stream length $M\rightarrow \infty$, the stochastic number
	$\bm{w}^{\mathsf{T}}_{\text{SC},M, i}\bm{x}_{\text{SC},M}+b_{\text{SC},M,i}$ converges to $\bm{w}^{\mathsf{T}}_{ i}\bm{x}+b_{i}$ almost surely.
\end{lemma}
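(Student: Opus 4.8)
The plan is to build the convergence from the level of a single stochastic number up to the full affine combination, invoking the strong law of large numbers (SLLN) at the base and then using the fact that almost-sure convergence is stable under finitely many arithmetic operations. I would fix a hidden neuron $i$ and begin with the base case. By Definition 2, each of $w^{j}_{\text{SC},M,i}$, $x^{j}_{\text{SC},M}$, and $b_{\text{SC},M,i}$ is realized as an $M$-bit stream whose bits are i.i.d.\ Bernoulli random variables, with success probabilities determined (under the chosen encoding) by the targets $w^{j}_{i}$, $x^{j}$, and $b_{i}$. The value represented by a length-$M$ stream is its empirical fraction of ones, i.e.\ the sample mean of these i.i.d.\ bits, so the SLLN gives that each such sample mean converges to its expectation almost surely as $M\rightarrow\infty$; in the unipolar case this is exactly $w^{j}_{\text{SC},M,i}\rightarrow w^{j}_{i}$, and in the bipolar case the affine map $x=2P(X=1)-1$ transports the same conclusion.

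Next I would propagate this through the SC multiplication. For the unipolar case the AND gate produces a stream whose $k$-th bit is the product $W_{k}X_{k}$ of the corresponding input bits; since the two operand streams are generated by independent SNGs, the sequence $\{W_{k}X_{k}\}$ is again i.i.d.\ with mean $w^{j}_{i}x^{j}$, so the SLLN applies directly and the product stream's empirical mean converges almost surely to $w^{j}_{i}x^{j}$ (the bipolar/XNOR case is identical after the change of variables $c=2P(C=1)-1$ recorded in Section 2.1). I would then handle the summation over $j$ together with the bias: the SC addition (a MUX tree) outputs a stream whose bits are again i.i.d.\ Bernoulli representing the sum, so one further application of the SLLN---or, equivalently, the elementary observation that a finite sum of almost-surely convergent sequences converges almost surely because the underlying probability-one events are closed under finite intersection---yields that $\bm{w}^{\mathsf{T}}_{\text{SC},M,i}\bm{x}_{\text{SC},M}+b_{\text{SC},M,i}$ converges almost surely to $\bm{w}^{\mathsf{T}}_{i}\bm{x}+b_{i}$. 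Any fixed scaling factor introduced by MUX-based addition is deterministic and known, so it can be absorbed into the design (or into the output weights $\alpha_{i}$) without affecting the convergence claim.

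The main obstacle, and the point where I would be most careful, is the independence bookkeeping that legitimizes the AND-gate step: the SLLN-for-products argument requires not merely that each product bit has the correct mean, but that within each stream the bits are i.i.d.\ and that distinct streams are mutually independent, so that $\{W_{k}X_{k}\}$ is itself a genuine i.i.d.\ sequence. This is precisely the $P(A\cdot B=1)=P(A=1)P(B=1)$ independence assumption flagged in Section 2.1, and I would state it explicitly as a hypothesis inherited from Definition 2. A secondary point worth noting is that almost-sure convergence is preserved across the composition only because the input dimension $n$, and hence the number of multiplications and additions, is finite and fixed; the argument reduces to a finite intersection of probability-one events and causes no trouble here, but it would not survive if the number of SC operations were allowed to grow with $M$.
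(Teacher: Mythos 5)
Your proof is correct, but it is grounded quite differently from the paper's. The paper's proof of Lemma~\ref{lem:1} never actually invokes the strong law of large numbers: it takes the convergence of each individual stochastic number $w^j_{\text{SC},M,i}(\omega)\to w^j_i$, $x^j_{\text{SC},M}(\omega)\to x^j$, $b_{\text{SC},M,i}(\omega)\to b_i$ as given ``by construction'' (Definition~2 stipulates that the bit-streams converge to the target values), and then argues deterministically for each fixed sample point $\omega$: products and finite sums of convergent real sequences converge, so a triangle inequality with $\epsilon'=\epsilon/(n+1)$ yields convergence of the whole affine expression for \emph{every} $\omega$, which makes the ``probability 1'' conclusion trivial. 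You instead push the analysis down to the bit level: you derive the base convergence from the SLLN applied to i.i.d.\ Bernoulli bits, and you treat the SC product as what it actually is in hardware---the empirical mean of the AND-ed bit sequence $\{W_kX_k\}$---applying the SLLN again to that i.i.d.\ sequence, rather than idealizing the product stochastic number as the exact arithmetic product of the two represented values (which is what the paper implicitly does; for finite $M$ the empirical mean of $W_kX_k$ is not the product of the two empirical means). Your route is more self-contained and more faithful to the SC computation model, at the cost of the mutual-independence hypotheses you rightly flag; the paper's route is shorter because the probabilistic content is absorbed into the definition. Your fallback argument (a finite intersection of probability-one events, with $n$ fixed) is also the cleaner way to align with the paper's reading, since it sidesteps the MUX-scaling issue entirely.
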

\begin{proof}
	% The definition of almost sure convergence is that for a sequence of random variables $X_1, X_2,...$, $X_n \xrightarrow{a.e.} X$ if
	% \begin{equation}
	% P\big(\big\{\omega\in\Omega: \lim_{n\rightarrow\infty}X_n(\omega) = X(\omega)\big\}\big) = 1.
	% \end{equation}
	% In other words, almost sure convergence means that sequence $X_n(\omega)$ converges to $X(\omega)$ for every $\omega\in\Omega$ except for a zero measure set.
	Let $\Omega$ be the sample space of all bit-streams generated to represent elements in $\bm{w}^{\mathsf{T}}_{ i}$, $\bm{x}$, and $b_i$. For each instance $\omega\in\Omega$, use notations $\bm{w}^{\mathsf{T}}_{\text{SC},M, i}(\omega)$, $\bm{x}_{\text{SC},M}(\omega)$, and $b_{\text{SC},M,i}(\omega)$ to represent stochastic numbers (or vectors) calculated from the corresponding $M$-bit streams associated with $\omega$. Moreover, define three constant random variables representing the target real values, namely for each $i\in\{1,...,N\}$,
	\begin{equation}
	\left\{
	\begin{array}{c}
	\bm{w}^{\mathsf{T}}_{ i}(\omega)\equiv \bm{w}^{\mathsf{T}}_{ i}, \forall\omega\in\Omega,\\
	\bm{x}(\omega)\equiv \bm{x}, \forall\omega\in\Omega,\\
	b_i(\omega)\equiv b_i, \forall\omega\in\Omega,
	\end{array}
	\right.
	\end{equation}
	We shall prove that for every $\omega\in\Omega$:
	\begin{equation}
	\lim_{M\rightarrow\infty}\bm{w}^{\mathsf{T}}_{\text{SC},M, i}(\omega)\cdot\bm{x}_{\text{SC},M}(\omega) + b_{\text{SC},M,i}(\omega) = \bm{w}^{\mathsf{T}}_{ i}(\omega)\cdot\bm{x}(\omega) + b_i(\omega).
	\end{equation}
	%Here $\omega$ consists of particular instance of the bit-streams generated to represent elements in 
	
	From the construction of the random variables, we have that for each $i$ and $j$
	\begin{align*}
		\lim_{M\rightarrow\infty}w^j_{\text{SC},M, i}(\omega) =& w^j_i,\\
		\lim_{M\rightarrow\infty}x^j_{\text{SC}, M}(\omega) =& x^j,\\
		\lim_{M\rightarrow\infty}b_{\text{SC},M, i}(\omega) =& b_i.
	\end{align*}
	Therefore, these exists $M_{min}(\omega)$ such that for all $M\ge M_{min}(\omega)$ and all $\epsilon>0$, we have
	\begin{align*}
		\big|w^j_{\text{SC},M, i}(\omega)x^j_{\text{SC}, M}(\omega) - w^j_ix^j\big| <& \epsilon'\\
		\big|b_{\text{SC},M, i}(\omega) - b_i\big| <& \epsilon',
	\end{align*}
	where $\epsilon' = \frac{1}{n+1}\epsilon$. Use an argument of triangle inequality to show
	\begin{equation}
	\Big|\bm{w}^{\mathsf{T}}_{\text{SC},M, i}(\omega)\cdot\bm{x}_{\text{SC},M}(\omega) + b_{\text{SC},M,i}(\omega) -\bm{w}^{\mathsf{T}}_{ i}\bm{x} - b_i\Big| < \epsilon
	\end{equation}
	Since $\epsilon$ can be arbitrarily small, it implies
	\begin{equation}
	\lim_{M\rightarrow\infty}\bm{w}^{\mathsf{T}}_{\text{SC},M, i}(\omega)\cdot\bm{x}_{\text{SC},M}(\omega) + b_{\text{SC},M,i}(\omega) = \bm{w}^{\mathsf{T}}_{ i}\bm{x} + b_i.
	\end{equation}
	Since this is true for every $\omega\in\Omega$, we conclude that
	\begin{equation}
	P\big(\big\{\omega\in\Omega:\lim_{M\rightarrow\infty}\bm{w}^{\mathsf{T}}_{\text{SC},M, i}(\omega)\cdot\bm{x}_{\text{SC},M}(\omega) + b_{\text{SC},M,i}(\omega) = \bm{w}^{\mathsf{T}}_{ i}\bm{x} + b_i\big\}\big) = 1.
	\end{equation}
	In other words, we proved that as $M\rightarrow\infty$, the stochastic number $\bm{w}^{\mathsf{T}}_{\text{SC},M, i}(\omega)\cdot\bm{x}_{\text{SC},M}(\omega) + b_{\text{SC},M,i}(\omega)$ almost surely converges to $\bm{w}^{\mathsf{T}}_{ i}\bm{x} + b_i$.
\end{proof}

\begin{lemma}\label{lem:2}
	If the sigmodial function $\sigma(t)$ has bounded derivative, then the stochastic number $\sigma(\bm{w}^{\mathsf{T}}_{\text{SC},M, i}\bm{x}_{\text{SC},M} + b_{\text{SC},M,i})$ almost surely converge to the real value $\sigma(\bm{w}^{\mathsf{T}}_{i}\bm{x}+b_{i})$ as the bit-stream length $M\rightarrow \infty$, .
\end{lemma}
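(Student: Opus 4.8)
The plan is to reduce Lemma~\ref{lem:2} to Lemma~\ref{lem:1} by exploiting that a bounded derivative forces $\sigma$ to be Lipschitz, so that $\sigma$ transports the almost-sure convergence of its argument into almost-sure convergence of its image. First I would set $L=\sup_{t\in\mathbb{R}}|\sigma'(t)|$, which is finite by hypothesis, and invoke the mean value theorem to obtain, for all $a,b\in\mathbb{R}$,
\begin{equation*}
\big|\sigma(a)-\sigma(b)\big|\le L\,|a-b|.
\end{equation*}
This converts the problem from one about the nonlinear map $\sigma$ into a purely linear estimate on the deviation of the argument.

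Next I would bring in Lemma~\ref{lem:1}, which guarantees an event $A\subseteq\Omega$ with $P(A)=1$ such that for every $\omega\in A$ the argument $\bm{w}^{\mathsf{T}}_{\text{SC},M, i}(\omega)\cdot\bm{x}_{\text{SC},M}(\omega)+b_{\text{SC},M,i}(\omega)$ converges to $\bm{w}^{\mathsf{T}}_{i}\bm{x}+b_i$ as $M\to\infty$. Fixing any such $\omega\in A$, I would apply the Lipschitz bound with $a$ equal to the stochastic argument and $b=\bm{w}^{\mathsf{T}}_{i}\bm{x}+b_i$, which yields
\begin{equation*}
\Big|\sigma\big(\bm{w}^{\mathsf{T}}_{\text{SC},M, i}(\omega)\cdot\bm{x}_{\text{SC},M}(\omega)+b_{\text{SC},M,i}(\omega)\big)-\sigma\big(\bm{w}^{\mathsf{T}}_{i}\bm{x}+b_i\big)\Big|\le L\,\Big|\bm{w}^{\mathsf{T}}_{\text{SC},M, i}(\omega)\cdot\bm{x}_{\text{SC},M}(\omega)+b_{\text{SC},M,i}(\omega)-\bm{w}^{\mathsf{T}}_{i}\bm{x}-b_i\Big|.
\end{equation*}
Since the right-hand side tends to $0$ for every $\omega\in A$ by Lemma~\ref{lem:1}, the left-hand side does too, so $\sigma$ of the stochastic argument converges to $\sigma(\bm{w}^{\mathsf{T}}_{i}\bm{x}+b_i)$ on all of $A$. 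Because $P(A)=1$, this is precisely almost-sure convergence, completing the argument.

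As for difficulty, this step is essentially routine once the Lipschitz property is in hand, and the real care needed is conceptual rather than computational. The key point is that continuity --- here delivered in the strong form of Lipschitz continuity --- is exactly what allows one to push a pointwise limit through $\sigma$, so that the almost-sure set furnished by Lemma~\ref{lem:1} is preserved verbatim rather than shrunk. I expect the only genuine obstacle to be making sure that the finiteness of $\sup_{t}|\sigma'(t)|$ really supplies a \emph{global} Lipschitz constant that is uniform in $t$; this is immediate from the mean value theorem but ought to be stated explicitly, since without a bound uniform over the whole line the transfer of convergence could in principle fail where the limiting argument is approached through regions of large slope.
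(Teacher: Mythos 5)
Your proposal is correct and follows essentially the same route as the paper's proof: use the bounded derivative (via the mean value theorem) to get a global Lipschitz bound on $\sigma$, then transfer the almost-sure convergence of the argument from Lemma~\ref{lem:1} through that bound. Your version is in fact slightly more careful than the paper's, since you make the probability-one event $A$ explicit and apply the estimate pointwise on it, whereas the paper states the inequality and concludes directly.
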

\begin{proof}
	We have the following inequalities:
	\begin{equation}
	\begin{split}
	&\big|\sigma(\bm{w}^{\mathsf{T}}_{\text{SC},M, i}\bm{x}_{\text{SC},M} + b_{\text{SC},M,i})-\sigma(\bm{w}^{\mathsf{T}}_{i}\bm{x}+b_{i})\big|\\
	&\le \max_t \big|\sigma'(t)\cdot |\bm{w}^{\mathsf{T}}_{\text{SC},M, i}\bm{x}_{\text{SC},M} + b_{\text{SC},M,i}-\bm{w}^{\mathsf{T}}_{i}\bm{x}-b_{i}| \big|\\
	&\le \big(\max_t \big|\sigma'(t) \big|\big)\cdot \big| \bm{w}^{\mathsf{T}}_{\text{SC},M, i}\bm{x}_{\text{SC},M} + b_{\text{SC},M,i}-\bm{w}^{\mathsf{T}}_{i}\bm{x}-b_{i}\big|
	\end{split}
	\end{equation}
	For the currently utilized activation functions, including sigmoid, tanh (hyperbolic tangent), ReLU functions, there is an upper bound on the derivatives. The maximum absolute value of the derivatives is often 1. Then, from the above Lemma 1 about the almost sure convergence of $\bm{w}^{\mathsf{T}}_{\text{SC},M, i}\bm{x}_{\text{SC},M} + b_{\text{SC},M,i}$ to $\bm{w}^{\mathsf{T}}_{i}\bm{x}+b_{i}$, we arrive at the almost sure convergence of $\sigma(\bm{w}^{\mathsf{T}}_{\text{SC},M, i}\bm{x}_{\text{SC},M} + b_{\text{SC},M,i})$. 
\end{proof}

Based on the above lemmas and the original universal approximation theorem, we arrive at the following universal approximation theorem for SCNNs.

\newtheorem{theorem}{Theorem}[section]
\begin{theorem}\label{thm:1}
	(Universal Approximation Theorem for SCNNs). For any continuous function $f(\bm{x})$ defined on $I_{n}$ and any $\epsilon>0$, we define an event that there exists an SCNN function $G_{\text{SC},M}(\bm{x}_{\text{SC},M})$ in the form of Eqn. (4) that satisfies 
	\begin{equation}
	\lim_{M\rightarrow\infty}\left|G_{\text{SC},M}(\bm{x}_{\text{SC},M})-f(\bm{x})\right|<\epsilon.
	\end{equation}
	This event is satisfied almost surely (with probability 1).
\end{theorem}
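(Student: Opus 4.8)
The plan is to reduce this to the classical universal approximation theorem of Cybenko combined with the almost-sure convergence already established in Lemma \ref{lem:2}. First I would invoke the original theorem to fix a \emph{deterministic} single-hidden-layer network $G(\bm{x})=\sum_{i=1}^N\alpha_i\sigma(\bm{w}^{\mathsf{T}}_i\bm{x}+b_i)$, whose neuron count $N$ and parameters $(\bm{w}_i,b_i,\alpha_i)$ are chosen so that $|G(\bm{x})-f(\bm{x})|<\epsilon/2$ on $I_n$. This network serves as the fixed "target" to which the stochastic network will be driven; running Cybenko with the margin $\epsilon/2$ rather than $\epsilon$ is deliberate, so that the limiting error sits strictly below $\epsilon$.

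Next I would construct the SCNN $G_{\text{SC},M}$ of Eqn. (4) using exactly these parameters: let the stochastic numbers $\bm{w}^{\mathsf{T}}_{\text{SC},M,i}$, $\bm{x}_{\text{SC},M}$, $b_{\text{SC},M,i}$ be the $M$-bit stochastic approximations of $\bm{w}^{\mathsf{T}}_i$, $\bm{x}$, $b_i$ as in Definition 2, while keeping the output weights $\alpha_i$ exact (consistent with the "ideal" SCNN assumption that the output layer is computed accurately). By Lemma \ref{lem:2}, for each fixed $i\in\{1,\dots,N\}$ the $i$-th hidden activation $\sigma(\bm{w}^{\mathsf{T}}_{\text{SC},M,i}\bm{x}_{\text{SC},M}+b_{\text{SC},M,i})$ converges almost surely to $\sigma(\bm{w}^{\mathsf{T}}_i\bm{x}+b_i)$ as $M\rightarrow\infty$. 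Denote by $A_i$ the probability-1 event on which this $i$-th convergence holds.

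The heart of the argument is then to assemble these finitely many almost-sure convergences into one for the whole network. Since $N$ is finite, the intersection $A=\bigcap_{i=1}^N A_i$ still has probability $1$; on $A$ every term converges, so by linearity of limits $G_{\text{SC},M}(\bm{x}_{\text{SC},M})=\sum_{i=1}^N\alpha_i\sigma(\bm{w}^{\mathsf{T}}_{\text{SC},M,i}\bm{x}_{\text{SC},M}+b_{\text{SC},M,i})$ converges to $\sum_{i=1}^N\alpha_i\sigma(\bm{w}^{\mathsf{T}}_i\bm{x}+b_i)=G(\bm{x})$. Because the absolute value is continuous, on $A$ we obtain $\lim_{M\rightarrow\infty}|G_{\text{SC},M}(\bm{x}_{\text{SC},M})-f(\bm{x})|=|G(\bm{x})-f(\bm{x})|\le\epsilon/2<\epsilon$. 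Hence the event described in the statement is witnessed by this explicit SCNN on the set $A$, and since $P(A)=1$ the claim follows.

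I expect the main obstacle to be bookkeeping rather than anything deep. One must check that the "there exists an SCNN" quantifier is harmless, which it is because the event is witnessed by the single deterministic construction above; and that the strict inequality $<\epsilon$ survives passing to the limit, which is precisely why I would reserve the $\epsilon/2$ margin. The only genuinely probabilistic ingredient is the finite-intersection step for the $A_i$, which is standard — all the real stochastic content (the strong law of large numbers governing each bit-stream) has already been absorbed into Lemmas \ref{lem:1} and \ref{lem:2}.
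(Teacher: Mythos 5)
Your proposal is correct and follows essentially the same route as the paper's proof: invoke Cybenko's theorem to fix a deterministic network $G$ with error margin $\epsilon/2$, build $G_{\text{SC},M}$ by replacing its parameters with $M$-bit stochastic representations, and use Lemma \ref{lem:2} to drive each hidden activation to its deterministic target almost surely. The only (cosmetic) difference is that you make the finite intersection of the probability-1 events $A_i$ explicit and pass to the limit directly, whereas the paper runs an explicit triangle-inequality bound $\left|G_{\text{SC},M}-G\right|\le\sum_i|\alpha_i|\,|\sigma(\cdot)-\sigma(\cdot)|<\epsilon/2$ for $M\ge M_{min}$; your handling of that step is, if anything, slightly cleaner.
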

\begin{proof}
	From the universal approximation theorem stated in \cite{cybenko1989approximation}, we know that there exists a function $G(x)$ representing a deterministic neural network such that $|G(x) - f(x)|<\epsilon/2$ for all $x\in I_n$. For each positive integer $M$ define $G_{\text{SC}, M}(x)$ as the SCNN function obtained by replacing each parameter of $G(x)$ with its $M$-bit stochastic representation. Then we have
	\begin{equation}
	\begin{split}
	\left|G_{\text{SC},M}(\bm{x}_{\text{SC},M})-f(\bm{x})\right|&=\left|G_{\text{SC},M}(\bm{x}_{\text{SC},M})-G(\bm{x})+G(\bm{x})-f(\bm{x})\right|\\
	& \leq \left|G_{\text{SC},M}(\bm{x}_{\text{SC},M})-G(\bm{x})\right|+\left|G(\bm{x})-f(\bm{x})\right|
	\end{split}
	\end{equation}
	Applying Lemma \ref{lem:1} and \ref{lem:2}, we can bound the first term as %$\left|G_{\text{SC},M}(\bm{x}_{\text{SC},M})-G(\bm{x})\right|$.We have
	\begin{equation}
	\begin{split}
	\left|G_{\text{SC},M}(\bm{x}_{\text{SC},M})-G(\bm{x})\right|
	&=\bigg|\sum^N_{i=1}\alpha_{i}\sigma(\bm{w}^{\mathsf{T}}_{\text{SC},M, i}\bm{x}_{\text{SC},M} + b_{\text{SC},M,i})-\sum^N_{i=1}\alpha_{i}\sigma(\bm{w}^{\mathsf{T}}_{i}\bm{x}+b_{i})\bigg|\\
	&\le\sum^N_{i=1}\bigg|\alpha_{i}\Big[\sigma(\bm{w}^{\mathsf{T}}_{\text{SC},M, i}\bm{x}_{\text{SC},M} + b_{\text{SC},M,i})-\sigma(\bm{w}^{\mathsf{T}}_{i}\bm{x}+b_{i})\Big]\bigg|\\
	&\le \sum^N_{i=1}\big|\alpha_{i}\big|\cdot\big|\sigma(\bm{w}^{\mathsf{T}}_{\text{SC},M, i}\bm{x}_{\text{SC},M} + b_{\text{SC},M,i})-\sigma(\bm{w}^{\mathsf{T}}_{i}\bm{x}+b_{i})\big|
	\end{split}
	\end{equation}
	where $N$ is the size of the hidden layer in the neural network represented by $G(\bm{x})$, and $\alpha_i$ is the $i$-th weight in the output layer.
	%where there is in general an upper bound on $|\alpha_i|$, say 1, in actual neural networks.
	
	% 	The second term $\left|G(\bm{x})-f(\bm{x})\right|$ corresponds to the original universal approximation theorem with continuous inputs \cite{cybenko1989approximation}. It has been proved using the Riesz Representation Theorem based on the discriminatory property of activation function $\sigma(\cdot)$, i.e., for a measure $\mu$ on $I_n$,  \begin{equation}
	% 	\int_{I_{n}}\sigma(\bm{w^\mathrm{T}x}+b)\mathrm{d}\mu(\bm{x})=0
	% 	\end{equation}
	% 	for all $\bm{w}\in\mathbb{R}^{n}$ and $b\in\mathbb{R}$ implies that $\mu=0$.
	% 	Please refer to \cite{cybenko1989approximation} for the detailed proof.
	
	% 	From the original universal approximation theorem, we know that for $\epsilon/2$, with probability 1 there exists an appropriate $N$ and the corresponding neural network $G(\bm{x})$, such that $\displaystyle\left|G(\bm{x})-f(\bm{x})\right|<\frac{\epsilon}{2}$. 
	Deriving from Lemma 2, we know that for $\displaystyle\frac{\epsilon}{2\sum_{i=1}^N\alpha_i}>0$, with probability 1 there exists $M_{min}$ such that \begin{equation}
	\big|\sigma(\bm{w}^{\mathsf{T}}_{\text{SC},M, i}\bm{x}_{\text{SC},M} + b_{\text{SC},M,i})-\sigma(\bm{w}^{\mathsf{T}}_{i}\bm{x}+b_{i})\big|<\frac{\epsilon}{2\sum_{i=1}^N\alpha_i}
	\end{equation}
	for $M\ge M_{min}$. Incorporating into Eqn. (13) we have $\displaystyle\left|G_{\text{SC},M}(\bm{x}_{\text{SC},M})-G(\bm{x})\right|<\frac{\epsilon}{2}$. Further incorporating into Eqn. (12) we have $\left|G_{\text{SC},M}(\bm{x}_{\text{SC},M})-f(\bm{x})\right|<\epsilon$ for $M\ge M_{min}$. Thereby we have formally proved that universal approximation theorem holds with probability 1 for SCNNs.
\end{proof}

Besides the universal approximation property, it is also critical to derive an appropriate bound for bit length $M$ in order to provide insights for the actual neural network implementations. 
The next theorem gives an explicit bound on the bit length for close approximation with high probability.
\begin{theorem}
	For the SCNN function $G_{\text{SC}, M}$ in Theorem \ref{thm:1}, let $M$ be any integer that satisfies
	\begin{equation}
	M > \frac{(n+1)^2\cdot N^2}{\epsilon^2\delta}.
	\end{equation}
	Then with probability at least $1-\delta$, $\left|G_{\text{SC},M}(\bm{x}_{\text{SC},M})-f(\bm{x})\right|<\epsilon$ holds for all $x\in I_n$.
\end{theorem}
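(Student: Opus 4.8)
The plan is to repeat the two-term decomposition from the proof of Theorem~\ref{thm:1}, but to replace the qualitative strong-law step of Lemma~\ref{lem:1} with a \emph{quantitative} second-moment estimate and then convert that estimate into a tail bound via a Chebyshev-type inequality. Concretely, I would again invoke Cybenko's theorem to fix a deterministic network $G$ with $|G(\bm x)-f(\bm x)|<\epsilon/2$ on $I_n$, and write $|G_{\text{SC},M}-f|\le|G_{\text{SC},M}-G|+|G-f|$. The whole problem then reduces to showing that the purely stochastic gap $|G_{\text{SC},M}(\bm x_{\text{SC},M})-G(\bm x)|$ falls below $\epsilon/2$ except on an event of probability at most $\delta$, with the claimed dependence of $M$ on $n$, $N$, $\epsilon$, and $\delta$.

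The quantitative heart is a variance bound for a single stochastic number. Each such number is the empirical mean of $M$ i.i.d.\ Bernoulli bits, so its estimate is unbiased and has variance at most $\tfrac{1}{4M}$ in the unipolar case (and at most $\tfrac1M$ in the bipolar case, where the value is $2\hat p-1$); crucially this bound is uniform in the encoded value because $p(1-p)\le\tfrac14$. Since an AND/XNOR product bit is itself Bernoulli, the SC product $w^j_{\text{SC},M,i}x^j_{\text{SC},M}$ is an unbiased estimate of $w^j_i x^j$ with the same $O(1/M)$ variance. Writing $D_i=(\bm w^{\mathsf T}_{\text{SC},M,i}\bm x_{\text{SC},M}+b_{\text{SC},M,i})-(\bm w^{\mathsf T}_i\bm x+b_i)$ as a sum of $n+1$ mean-zero terms and applying the triangle (Minkowski) inequality in $L^2$, I obtain $\|D_i\|_2\le (n+1)/\sqrt M$.

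Next I would propagate this through the nonlinearity and the output layer exactly as in Lemma~\ref{lem:2} and in the output-layer bound within the proof of Theorem~\ref{thm:1}, but at the level of $L^2$ norms rather than almost-sure limits. The bounded-derivative property $|\sigma'|\le1$ gives the pointwise Lipschitz estimate $|\sigma(\mathrm{in}_{\text{SC},i})-\sigma(\mathrm{in}_i)|\le|D_i|$, hence $\|\sigma(\mathrm{in}_{\text{SC},i})-\sigma(\mathrm{in}_i)\|_2\le(n+1)/\sqrt M$. A second application of Minkowski over the $N$ hidden units, together with the normalization $|\alpha_i|\le1$, yields $\|G_{\text{SC},M}-G\|_2\le N(n+1)/\sqrt M$, i.e.\ $\mathbb E[(G_{\text{SC},M}-G)^2]\le N^2(n+1)^2/M$. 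The second-moment inequality $P(|G_{\text{SC},M}-G|\ge\epsilon)\le \mathbb E[(G_{\text{SC},M}-G)^2]/\epsilon^2$ then gives a failure probability of at most $N^2(n+1)^2/(M\epsilon^2)$; forcing this to be $\le\delta$ is exactly the stated condition $M>(n+1)^2N^2/(\epsilon^2\delta)$ (the split into $\epsilon/2+\epsilon/2$ only changes the constant, which I would absorb by rescaling).

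I expect two points to require the most care. First, the exponents $(n+1)^2$ and $N^2$ come specifically from chaining triangle inequalities on standard deviations; if one instead exploited independence of the $n+1$ within-neuron terms, the variance would only be $O((n+1)/M)$ and the bound would sharpen to $(n+1)^1$, so matching the statement means deliberately committing to the cruder Minkowski route. Second, and more seriously, the clause ``for all $\bm x\in I_n$'' is delicate: the Chebyshev bound above is pointwise in $\bm x$, and although the variance estimate is uniform in $\bm x$ (being value-independent), turning a per-input $1-\delta$ guarantee into a single event valid simultaneously for every $\bm x$ requires an additional covering/net argument over the compact cube together with the continuity of $G_{\text{SC},M}$ and $f$. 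I would either prove the bound for each fixed $\bm x$, or introduce a finite $\epsilon$-net of $I_n$ and pay a union-bound factor in the net size. I would also flag the implicit assumption $|\alpha_i|\le1$, since the output weights are absent from the stated bound yet some normalization of them is needed for it to hold.
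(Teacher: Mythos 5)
Your proposal is correct in substance and reaches the stated bound, but it is structured differently from the paper's proof, and the difference matters. The paper uses the same ingredients --- Cybenko's theorem, the two-term decomposition, and the $\frac{1}{4M}$ variance bound for each SC product --- but it applies Chebyshev's inequality \emph{per product term}: it requires each $|w^j_{\text{SC},M,i}x^j_{\text{SC},M}-w^j_ix^j|$ to be below $\frac{\epsilon}{2(n+1)\sum_i\alpha_i}$ with probability $1-\delta$, and then concludes that the global event $|G_{\text{SC},M}(\bm{x}_{\text{SC},M})-G(\bm{x})|<\epsilon/2$ also holds with probability $1-\delta$. That inference silently omits a union bound over the roughly $(n+1)N$ product terms; done rigorously, the per-term confidence would have to be $\delta/((n+1)N)$, degrading the bound to order $(n+1)^3N^3/(\epsilon^2\delta)$. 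Your route --- aggregating second moments deterministically via Minkowski and the Lipschitz property to get $\mathbb{E}\big[(G_{\text{SC},M}-G)^2\big]\le N^2(n+1)^2/(4M)$, then applying a single second-moment tail bound --- avoids this issue entirely, because only one probabilistic step is taken; note that if you keep the factor $\frac{1}{4}$ from the Bernoulli variance (rather than discarding it as you did, which costs you a factor of $4$ in $M$), the constants come out to exactly the stated bound, with no rescaling needed. Finally, the two caveats you flag are real and are both glossed over by the paper: the final step $\sum_i\alpha_i\le N$ implicitly assumes $|\alpha_i|\le 1$ (an unstated normalization of the output weights), and the Chebyshev guarantee is pointwise in $\bm{x}$, so the clause ``for all $x\in I_n$'' requires either a per-input reading of the theorem or an additional covering/union argument that neither the paper nor (yet) your proposal supplies.
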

\begin{proof}
	
	Different from the above proof based on the strong law of large numbers (almost sure convergence), deriving bounds is more related to the weak law (convergence in probability). As the former case will naturally ensure the latter, we have the following convergence in probability property: For any $\epsilon, \delta>0$, there exists $M^\delta_{min}$, such that for any $M\ge M^\delta_{min}$, we have \begin{equation}
	Pr\Big\{\left|G_{\text{SC},M}(\bm{x}_{\text{SC},M})-f(\bm{x})\right|<\epsilon\Big\}>1-\delta
	\end{equation}
	
	Based on a reverse order of the above proof of universal approximation, the above inequality is satisfied when we have \begin{equation}
	Pr\Big\{\left|G_{\text{SC},M}(\bm{x}_{\text{SC},M})-G(\bm{x})\right|<\frac{\epsilon}{2}\Big\}>1-\delta
	\end{equation} Furthermore, the above inequality is satisfied when we have
	\begin{equation}
	Pr\bigg\{\left|w^j_{\text{SC},M,i}x^j_{\text{SC},M}-w^j_{i}x^j\right|<\frac{\epsilon}{2(n+1)\cdot \sum_i \alpha_i}\bigg\}>1-\delta
	\end{equation}
	
	As each bit in stochastic number $w^j_{\text{SC},M,i}x^j_{\text{SC},M}$ satisfies a binary distribution with expectation $w^j_{i}x^j$, the maximum variance is $\displaystyle\frac{1}{4}$. Due to i.i.d. property, the maximum variance ($\sigma^2$) of $w^j_{\text{SC},M,i}x^j_{\text{SC},M}$ is $\displaystyle\frac{1}{4M}$. According to the Chebyshev's inequality $\displaystyle Pr\big(\|X-\mu\|\ge k\sigma\big)\le \frac{1}{k^2}$, we let $\displaystyle \frac{1}{k^2}=\delta$ and obtain 
	% $\frac{1}{2\sqrt{\delta M}}=\frac{\epsilon}{2(n+1)\cdot \sum_i \alpha_i}$.
	\begin{equation}
	\frac{1}{2\sqrt{\delta M}}=\frac{\epsilon}{2(n+1)\cdot \sum_i \alpha_i}
	\end{equation}
	Then we derive an upper bound of $M^\delta_{min}$ as 
	\begin{equation}
	M_{min}^\delta\le\frac{(n+1)^2\cdot \big(\sum_i\alpha_i\big)^2}{\epsilon^2\delta}\le \frac{(n+1)^2\cdot N^2}{\epsilon^2\delta}
	\end{equation}
	% This bound guarantees that for any greater $M$ the approximation error will be less than $\epsilon$ with probability $1-\delta$.
\end{proof}

\subsection{Universal Approximation of BNNs and Equivalence between SCNNs and BNNs}

In this section we start from the formal definition of BNNs of interests and then state the universal approximation property. Similar to the definition of SCNNs in Section 3, here we focus on an "ideal" BNN that is independent of actual BNN implementations. An illustration is shown in Figure \ref{fig_BNN}.

\begin{wrapfigure}{r}{0.5\textwidth}
	\begin{center}
		\includegraphics[width = 0.42\textwidth]{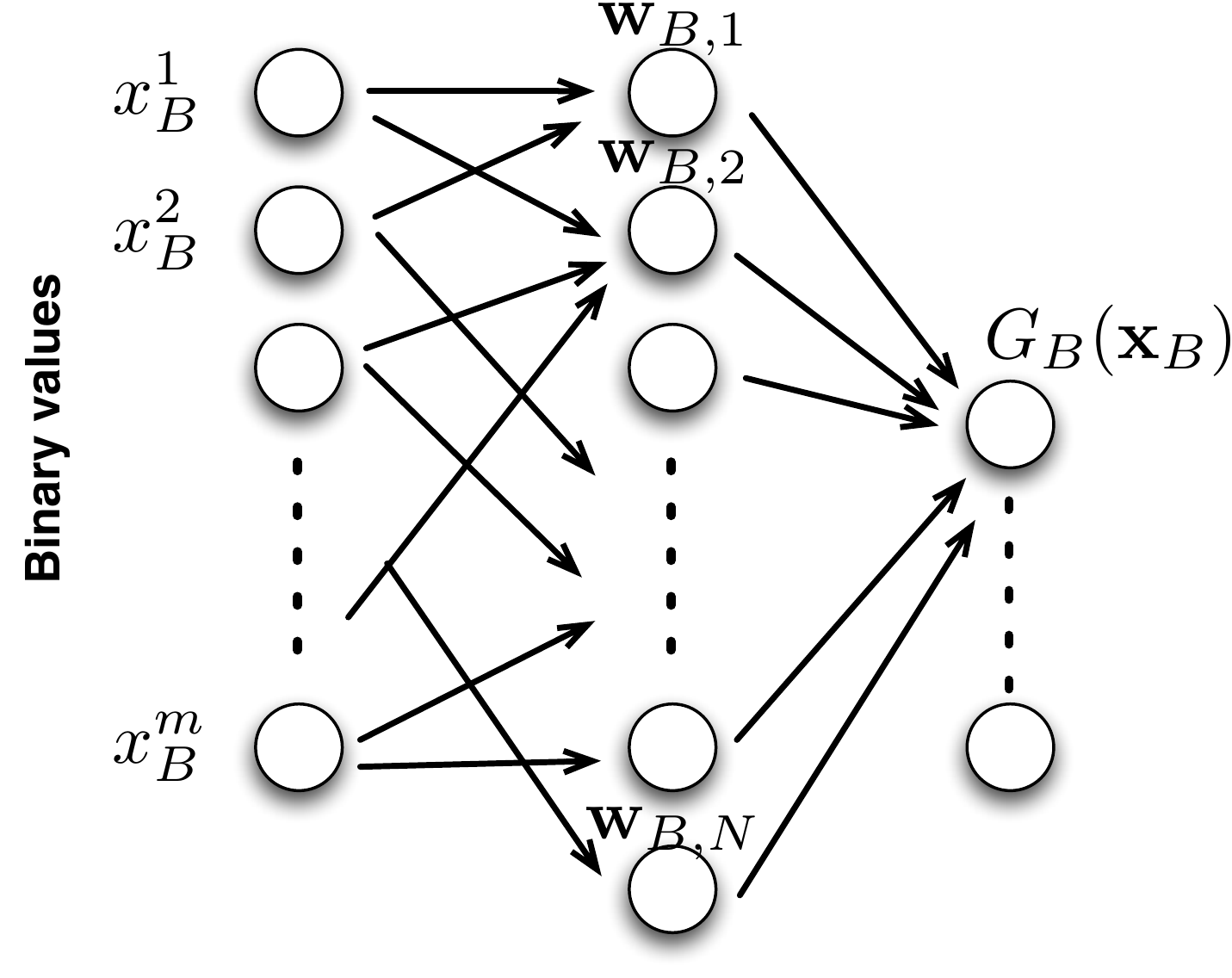}
		\caption{The structure of BNN of interest.}
		\label{fig_BNN}
	\end{center}
	\vspace{-0.3cm}
\end{wrapfigure}

\begin{definition}
	A BNN of interest is defined as a function $G_{\text{B}}(\bm{x}_{\text{B}})$, satisfying:
	\begin{equation}
	G_{\text{B}}(\bm{x}_{\text{B}})=\sum^N_{i=1}\alpha_{i}\sigma(\bm{w}^{\mathsf{T}}_{\text{B}, i}\bm{x}_{\text{B}}+b_{\text{B},i})
	\end{equation}
	where the input vector $\bm{x}_{\text{B}}$ and weight vector $\bm{w}_{\text{B},i}$ for each $i$ represent vectors of binary values. Let $m$ denote the dimensionality in these two vectors (dimension of inputs). $b_{\text{B},i}$ is a binary bias value. 
	The computation in $\bm{w}^{\mathsf{T}}_{\text{B}, i}\bm{x}_{\text{B}}+b_{\text{B},i}$ follows the BNN rules as described in Section 2.2.
	Similar to SCNNs, we also consider here accurate activation and output layer calculation. This is reasonable and also applied in BNN deployments because the output layer size is typically very small.
\end{definition}

\textbf{The Equivalence of SCNNs and BNNs:} The BNNs can be transformed into SCNNs, and vice versa. We illustrate the former case as an example. Let $M$ denote the length of stochastic number and the number of inputs in SCNN becomes $\displaystyle n=\frac{m}{M}$. Then the first input stochastic number $x_{\text{SC}}^1=\bm{x}_{\text{B}}[1:M]$ (i.e., the first $M$ bits in $\bm{x}_{\text{B}}$), the second input stochastic number $x_{\text{SC}}^2=\bm{x}_{\text{B}}[M+1:2M]$, and so on. This also applies to the weight stochastic numbers. The bias stochastic number $b_{\text{SC},i}$ can be a sign extension of $b_{\text{B},i}$. In this way the BNN is transformed into SCNN described in Definition 2. The transformation from SCNN to BNN is similar.

Because of the universal approximation property of SCNNs and the equivalence of BNNs, we arrive at the universal approximation for BNNs as well.

\begin{theorem}
	(Universal Approximation Theorem for BNNs). For any continuous function $f(\bm{x})$ defined on $I_{n}$, $\epsilon>0$, we define an event that there exists an BNN function $G_{\text{B}}(\bm{x}_{\text{B}})$ in the form of Eqn. (21) that satisfies 
	\begin{equation}
	\lim_{m\rightarrow\infty}\left|G_{\text{B}}(\bm{x}_{\text{B}})-f(\bm{x})\right|<\epsilon.
	\end{equation}
	This event is satisfied almost surely (with probability 1).
\end{theorem}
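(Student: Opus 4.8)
The plan is to use the universal approximation theorem for SCNNs (Theorem \ref{thm:1}) together with the structural equivalence between SCNNs and BNNs as the bridge, exactly as motivated in the introduction. The underlying idea is that every function realizable by an SCNN can be realized by a BNN of correspondingly larger input dimension, so the almost-sure approximation guarantee already established for SCNNs should transfer verbatim to BNNs without any fresh probabilistic argument.

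First I would fix the continuous target $f$ on $I_n$ and the tolerance $\epsilon>0$, and invoke Theorem \ref{thm:1} to obtain an SCNN $G_{\text{SC},M}(\bm{x}_{\text{SC},M})$ with $n$ inputs, each encoded by an $M$-bit stochastic stream, such that $\lim_{M\to\infty}\left|G_{\text{SC},M}(\bm{x}_{\text{SC},M})-f(\bm{x})\right|<\epsilon$ holds with probability $1$. Next I would apply the equivalence construction in the reverse (SCNN-to-BNN) direction: concatenating the $M$-bit streams of the $n$ stochastic inputs (and likewise the weight streams) into a single binary vector of length $m=nM$, and taking the bias to be a sign extension of $b_{\text{B},i}$, realizes $G_{\text{SC},M}$ as a BNN $G_{\text{B}}(\bm{x}_{\text{B}})$ of the form of Eqn. (21). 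The point to verify is that the BNN's binary arithmetic (the single-gate multiplications and MUX/counter-based additions of Section 2.2) reproduces, bit for bit, the SC arithmetic rules used to compute $\bm{w}^{\mathsf{T}}_{\text{SC},M,i}\bm{x}_{\text{SC},M}+b_{\text{SC},M,i}$, so that $G_{\text{B}}(\bm{x}_{\text{B}})$ and $G_{\text{SC},M}(\bm{x}_{\text{SC},M})$ denote the same real number for every realization of the streams.

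With this identification in place the limits line up cleanly: holding the domain dimension $n$ of $f$ fixed and letting the stream length $M$ grow drives the BNN input dimension $m=nM\to\infty$. Because the BNN and SCNN outputs agree on every sample point $\omega$, the event $\{\lim_{m\to\infty}\left|G_{\text{B}}(\bm{x}_{\text{B}})-f(\bm{x})\right|<\epsilon\}$ coincides with the event $\{\lim_{M\to\infty}\left|G_{\text{SC},M}(\bm{x}_{\text{SC},M})-f(\bm{x})\right|<\epsilon\}$, which Theorem \ref{thm:1} already shows has probability $1$. Hence the BNN approximation event holds almost surely, as claimed.

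The main obstacle I anticipate is not the limiting argument, which is immediate once the correspondence is fixed, but rather pinning down the equivalence rigorously. One must argue that the sign-extended bias and the per-segment grouping are compatible with the gate-level semantics so that no accuracy is lost in translation, and one must be careful about interpreting the theorem's unqualified limit $m\to\infty$: the natural correspondence only supplies the values $m=nM$ with $n$ fixed, so the statement is really about BNNs whose input dimension grows in multiples of the (fixed) number of stochastic inputs. An arbitrary increase of $m$ that ignores the block structure of the encoding need not correspond to any valid SCNN, and this subtlety is the part of the argument that most deserves explicit justification.
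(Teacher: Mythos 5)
Your proposal is correct and follows essentially the same route as the paper: invoke Theorem \ref{thm:1} for SCNNs, then use the SCNN-to-BNN transformation of Section 4.2 (concatenating the $M$-bit streams into a length-$m=nM$ binary input with sign-extended bias) to realize the approximating SCNN as a BNN, so the almost-sure event transfers directly. Your closing caveat --- that the limit $m\to\infty$ only ranges over multiples of $n$ tied to the block structure of the encoding --- is a genuine subtlety that the paper's one-line proof silently glosses over, and your treatment is the more careful one.
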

\begin{proof}
	Apply Theorem \ref{thm:1} to obtain a close approximation of $f(x)$ with SCNN functions, then build a BNN function that closely approximations the SCNN function.
\end{proof}

The equivalence in SCNNs and BNNs also leads to the same bound, defined as the total number of input bits $m=n\cdot M$ required to achieve universal approximation. The reasoning is using proof by contradiction. Suppose that SCNNs have a lower bound, i.e., $n\cdot M_{min}<m_{min}$. Then there exists an SCNN with $n$ inputs each with $M_{min}$ bits satisfying the universal approximation property. From the above equivalance analysis we can construct a BNN with $M_{min}\cdot n$ input bits that also achieves such property, which is smaller and thus in contradiction with the bound $m_{min}$. And vice versa.

\section{Energy Complexity and Hardware Design Implications}
\subsection{Energy Complexity Analysis}
The \emph{energy complexity}, as defined and described in \cite{martin2001towards,khude2005time}, specifies the asymptotic energy consumption with the growth of neural network size. It can be perceived as a multiplication of the time complexity and parallelism degree, and therefore is important for hardware implementations and evaluations. As an example, when the input size (number of bits) is $n$, a ripple carry adder has an energy complexity of O($n$) whereas a multiplier has energy complexity of O($n^2$). On the other hand, both of their time complexity is O($n$). The reason is because the ripple carry adder is a sequential computation whereas the multiplier is a parallel computation.

Next we provide an analysis on the energy complexity of the key calculation in $\bm{w}^{\mathsf{T}}_{\text{SC},M, i}\bm{x}_{\text{SC},M}+b_{\text{SC},M,i}$ in SCNNs and  $\bm{w^{\mathsf{T}}_{\text{B}, i}x_{\text{B}}}+b_{\text{B},i}$ in BNNs. From the equivalence analysis in Section 4.2, we have $m=n\cdot M$ and $M\ge M_{min}$ for satisfying the universal approximation property. According to the hardware implementation details in Section 2, the multiplication of two bits has energy complexity of O(1), then the multiplication of two stochastic numbers has energy complexity of O($M$). The addition of a set of $n$ stochastic numbers has energy complexity of O($nM$) using simple calculation units like multiplexers or energy complexity O($n\log n \cdot M$) using more accurate accumulation units like the approximate parallel counter (APC) \cite{kyounghoon2015approximate}. As a result, the overall energy complexity in $\bm{w}^{\mathsf{T}}_{\text{SC},M, i}\bm{x}_{\text{SC},M}+b_{\text{SC},M,i}$ is O($nM$) (for less accurate results) or O($n\log n \cdot M$) (for more accurate results). For the whole layer with $N$ neurons, the overall energy complexity is $n\cdot M\cdot N$ or $n\log n\cdot M\cdot N$. The energy complexity for BNNs with $m=n\cdot M$ is the same due to the equivalence.

\subsection{Hardware Design Implications}

Despite the same energy complexity, the actual hardware implementations of SCNNs and BNNs are different. 
As discussed before, SCNNs "stretch" in the temporal domain whereas BNNs span in the spatial domain. This is in fact the most important advantage of SCNNs. For BNN actual implementations, there is often an imbalance between the input I/O size and the computation requirement. The total computation requirement (please refer to the energy complexity discussion) is low, but the input requirement is huge even compared with conventional neural networks. This makes actual BNN implementations I/O bound systems, as in actual hardware tapeouts the I/O clock frequency is much lower compared with the computation clock frequency. In other words, the advantage of low and simple computation in BNNs is often not fully exploited in actual deployments \cite{zhao2017accelerating,umuroglu2017finn}. This limitation can be effectively mitigated by SCNNs, because the spatial requirement is effectively traded-off with the temporal requirement. In this aspect SCNNs can use lower I/O account and thereby more effective usage of hardware computation and memory storage resources compared with BNN counterparts, thereby becoming more suitable for hardware implementations.

On the other hand, BNNs are more heavily optimized in literature compared with SCNNs. Especially, many research work \cite{courbariaux2015binaryconnect,hubara2016quantized} are dedicated for effective training methods for BNNs making efficient usage of randomization techniques. On the other hand, the research on SCNNs are mainly from the hardware aspect \cite{ren2017sc,yu2017accurate,li2017structural}. For training these work use a straightforward way of transforming directly (every input and weight) from conventional neural networks to stochastic numbers. As a result, it will be effective to take advantage of the training methods for BNNs, transform into SCNNs that are more suitable for hardware implementations using the method described in Section 4.2. In this way, we can effectively exploit the advantage while hiding weakness in both SCNNs and BNNs.

\section{Conclusion}

SCNNs and BNNs are low-complexity variants of deep neural networks that are particularly suitable for hardware implementations.
In this paper, we conduct theoretical analysis and comparison between SCNNs and BNNs in terms of universal approximation property, energy complexity, and suitability for hardware implementations. 
More specifically, we prove that the "ideal" SCNNs and BNNs satisfy the universal approximation property with probability 1. The proof is conducted by first proving the property for SCNNs from the strong law of large numbers, and then using SCNNs as a "bridge" to prove for BNNs. Based on the universal approximation property, we further prove that SCNNs and BNNs exhibit the same energy complexity. In other words, they have the same asymptotic energy consumption with the growing of network size. We also provide a detailed analysis of the pros and cons of SCNNs and BNNs for hardware implementations and present a way of effectively exploiting the advantage of each type while hiding the weakness.

\bibliography{./ref}

\begin{thebibliography}{10}

\bibitem{mahajan2016tabla}
Mahajan, D., Park, J., Amaro, E., Sharma, H., Yazdanbakhsh, A., Kim, J.K.,
  Esmaeilzadeh, H.:
\newblock Tabla: A unified template-based framework for accelerating
  statistical machine learning.
\newblock In: High Performance Computer Architecture (HPCA), 2016 IEEE
  International Symposium on, IEEE (2016)  14--26

\bibitem{zhao2017accelerating}
Zhao, R., Song, W., Zhang, W., Xing, T., Lin, J.H., Srivastava, M., Gupta, R.,
  Zhang, Z.:
\newblock Accelerating binarized convolutional neural networks with
  software-programmable fpgas.
\newblock In: Proceedings of the 2017 ACM/SIGDA International Symposium on
  Field-Programmable Gate Arrays, ACM (2017)  15--24

\bibitem{umuroglu2017finn}
Umuroglu, Y., Fraser, N.J., Gambardella, G., Blott, M., Leong, P., Jahre, M.,
  Vissers, K.:
\newblock Finn: A framework for fast, scalable binarized neural network
  inference.
\newblock In: Proceedings of the 2017 ACM/SIGDA International Symposium on
  Field-Programmable Gate Arrays, ACM (2017)  65--74

\bibitem{company1}

\newblock
  \url{http://www.techradar.com/news/computing-components/processors/google-s-tensor-processing-unit-\
  explained-this-is-what-the-future-of-\ computing-looks-like-1326915}

\bibitem{company2}

\newblock
  \url{https://www.sdxcentral.com/articles/news/intels-deep-learning-chips-will-arrive-\
  2017/2016/11/}

\bibitem{han2016eie}
Han, S., Liu, X., Mao, H., Pu, J., Pedram, A., Horowitz, M.A., Dally, W.J.:
\newblock Eie: efficient inference engine on compressed deep neural network.
\newblock In: Proceedings of the 43rd International Symposium on Computer
  Architecture, IEEE Press (2016)  243--254

\bibitem{chen2014dadiannao}
Chen, Y., Luo, T., Liu, S., Zhang, S., He, L., Wang, J., Li, L., Chen, T., Xu,
  Z., Sun, N.,  et~al.:
\newblock Dadiannao: A machine-learning supercomputer.
\newblock In: Proceedings of the 47th Annual IEEE/ACM International Symposium
  on Microarchitecture, IEEE Computer Society (2014)  609--622

\bibitem{moons201714}
Moons, B., Uytterhoeven, R., Dehaene, W., Verhelst, M.:
\newblock 14.5 envision: A 0.26-to-10tops/w subword-parallel
  dynamic-voltage-accuracy-frequency-scalable convolutional neural network
  processor in 28nm fdsoi.
\newblock In: Solid-State Circuits Conference (ISSCC), 2017 IEEE International,
  IEEE (2017)  246--247

\bibitem{courbariaux2015binaryconnect}
Courbariaux, M., Bengio, Y., David, J.P.:
\newblock Binaryconnect: Training deep neural networks with binary weights
  during propagations.
\newblock In: Advances in neural information processing systems. (2015)
  3123--3131

\bibitem{ren2017sc}
Ren, A., Li, Z., Ding, C., Qiu, Q., Wang, Y., Li, J., Qian, X., Yuan, B.:
\newblock Sc-dcnn: highly-scalable deep convolutional neural network using
  stochastic computing.
\newblock In: Proceedings of the Twenty-Second International Conference on
  Architectural Support for Programming Languages and Operating Systems, ACM
  (2017)  405--418

\bibitem{yu2017accurate}
Yu, J., Kim, K., Lee, J., Choi, K.:
\newblock Accurate and efficient stochastic computing hardware for
  convolutional neural networks.
\newblock In: Computer Design (ICCD), 2017 IEEE International Conference on
  Computer Design, IEEE (2017)  105--112

\bibitem{kyounghoon2015approximate}
Kyounghoon~Kim, Jongeun~Lee, K.C.:
\newblock Approximate de-randomizer for stochastic circuits.
\newblock In: SoC Design Conference (ISOCC), 2015 International SoC Design
  Conference, IEEE (2015)

\bibitem{merolla2014million}
Merolla, P.A., Arthur, J.V., Alvarez-Icaza, R., Cassidy, A.S., Sawada, J.,
  Akopyan, F., Jackson, B.L., Imam, N., Guo, C., Nakamura, Y.,  et~al.:
\newblock A million spiking-neuron integrated circuit with a scalable
  communication network and interface.
\newblock Science \textbf{345}(6197) (2014)  668--673

\bibitem{liquantized}
Li, B., Najafi, M.H., Yuan, B., Lilja, D.J.:
\newblock Quantized neural networks with new stochastic multipliers

\bibitem{neftci2016stochastic}
Neftci, E.:
\newblock Stochastic neuromorphic learning machines for weakly labeled data.
\newblock In: Computer Design (ICCD), 2016 IEEE 34th International Conference
  on, IEEE (2016)  670--673

\bibitem{andreou2016software}
Andreou, A.S., Chatzis, S.P.:
\newblock Software defect prediction using doubly stochastic poisson processes
  driven by stochastic belief networks.
\newblock Journal of Systems and Software \textbf{122} (2016)  72--82

\bibitem{csaji2001approximation}
Cs{\'a}ji, B.C.:
\newblock Approximation with artificial neural networks.
\newblock Faculty of Sciences, Etvs Lornd University, Hungary \textbf{24}
  (2001) ~48

\bibitem{brown2001stochastic}
Brown, B.D., Card, H.C.:
\newblock Stochastic neural computation. i. computational elements.
\newblock IEEE Transactions on computers \textbf{50}(9) (2001)  891--905

\bibitem{li2017deep}
Li, H., Wei, T., Ren, A., Zhu, Q., Wang, Y.:
\newblock Deep reinforcement learning: Framework, applications, and embedded
  implementations.
\newblock arXiv preprint arXiv:1710.03792 (2017)

\bibitem{li2017hardware}
Li, J., Yuan, Z., Li, Z., Ding, C., Ren, A., Qiu, Q., Draper, J., Wang, Y.:
\newblock Hardware-driven nonlinear activation for stochastic computing based
  deep convolutional neural networks.
\newblock In: Neural Networks (IJCNN), 2017 International Joint Conference on
  Neural Networks, IEEE (2017)  1230--1236

\bibitem{yuan2017softmax}
Yuan, Z., Li, J., Li, Z., Ding, C., Ren, A., Yuan, B., Qiu, Q., Draper, J.,
  Wang, Y.:
\newblock Softmax regression design for stochastic computing based deep
  convolutional neural networks.
\newblock In: Proceedings of the on Great Lakes Symposium on VLSI 2017, ACM
  (2017)  467--470

\bibitem{li2017structural}
Li, Z., Ren, A., Li, J., Qiu, Q., Yuan, B., Draper, J., Wang, Y.:
\newblock Structural design optimization for deep convolutional neural networks
  using stochastic computing.
\newblock In: Proceedings of the Conference on Design, Automation \& Test in
  Europe, European Design and Automation Association (2017)  250--253

\bibitem{david2007hardware}
David, J.P., Kalach, K., Tittley, N.:
\newblock Hardware complexity of modular multiplication and exponentiation.
\newblock IEEE Transactions on Computers \textbf{56}(10) (2007)

\bibitem{cybenko1989approximation}
Cybenko, G.:
\newblock Approximation by superpositions of a sigmoidal function.
\newblock Mathematics of control, signals and systems \textbf{2}(4) (1989)
  303--314

\bibitem{hornik1989multilayer}
Hornik, K., Stinchcombe, M., White, H.:
\newblock Multilayer feedforward networks are universal approximators.
\newblock Neural networks \textbf{2}(5) (1989)  359--366

\bibitem{barron1993universal}
Barron, A.R.:
\newblock Universal approximation bounds for superpositions of a sigmoidal
  function.
\newblock IEEE Transactions on Information theory \textbf{39}(3) (1993)
  930--945

\bibitem{delalleau2011shallow}
Delalleau, O., Bengio, Y.:
\newblock Shallow vs. deep sum-product networks.
\newblock In: Advances in Neural Information Processing Systems. (2011)
  666--674

\bibitem{montufar2014number}
Montufar, G.F., Pascanu, R., Cho, K., Bengio, Y.:
\newblock On the number of linear regions of deep neural networks.
\newblock In: Advances in neural information processing systems. (2014)
  2924--2932

\bibitem{telgarsky2016benefits}
Telgarsky, M.:
\newblock Benefits of depth in neural networks.
\newblock arXiv preprint arXiv:1602.04485 (2016)

\bibitem{liang2016deep}
Liang, S., Srikant, R.:
\newblock Why deep neural networks for function approximation?
\newblock arXiv preprint arXiv:1610.04161 (2016)

\bibitem{zhao2017theoretical}
Zhao, L., Liao, S., Wang, Y., Tang, J., Yuan, B.:
\newblock Theoretical properties for neural networks with weight matrices of
  low displacement rank.
\newblock arXiv preprint arXiv:1703.00144 (2017)

\bibitem{martin2001towards}
Martin, A.J.:
\newblock Towards an energy complexity of computation.
\newblock Information Processing Letters \textbf{77}(2-4) (2001)  181--187

\bibitem{khude2005time}
Khude, N., Kumar, A., Karnik, A.:
\newblock Time and energy complexity of distributed computation in wireless
  sensor networks.
\newblock In: INFOCOM 2005. 24th Annual Joint Conference of the IEEE Computer
  and Communications Societies. Proceedings IEEE. Volume~4., IEEE (2005)
  2625--2637

\bibitem{hubara2016quantized}
Hubara, I., Courbariaux, M., Soudry, D., El-Yaniv, R., Bengio, Y.:
\newblock Quantized neural networks: Training neural networks with low
  precision weights and activations.
\newblock arXiv preprint arXiv:1609.07061 (2016)

\end{thebibliography}
\bibliographystyle{splncs}

% that's all folks
\end{document}